\documentclass[letterpaper, 10 pt, journal, twoside]{IEEEtran}
\usepackage{microtype}
\usepackage{amsmath, amssymb}
\usepackage{graphicx}
\usepackage{stfloats}
\usepackage[ruled,vlined,noend]{algorithm2e}
\usepackage{booktabs, multirow}
\usepackage{balance}
\usepackage[colorlinks]{hyperref}
\usepackage{color}
\usepackage{float}
\usepackage{gensymb}
\usepackage[usenames,dvipsnames]{xcolor}
\usepackage{soul}

\usepackage[noadjust]{cite}

\usepackage{pifont}

\newtheorem{lem}{Lemma}
\newtheorem{theorem}{Theorem}
\newtheorem{remark}{Remark}
\newtheorem{definition}{Definition}
\newtheorem{proof}{Proof}

% *** GRAPHICS RELATED PACKAGES ***
%
\ifCLASSINFOpdf
  % \usepackage[pdftex]{graphicx}
  % declare the path(s) where your graphic files are
  % \graphicspath{{../pdf/}{../jpeg/}}
  % and their extensions so you won't have to specify these with
  % every instance of \includegraphics
  % \DeclareGraphicsExtensions{.pdf,.jpeg,.png}
\else
  % or other class option (dvipsone, dvipdf, if not using dvips). graphicx
  % will default to the driver specified in the system graphics.cfg if no
  % driver is specified.
  % \usepackage[dvips]{graphicx}
  % declare the path(s) where your graphic files are
  % \graphicspath{{../eps/}}
  % and their extensions so you won't have to specify these with
  % every instance of \includegraphics
  % \DeclareGraphicsExtensions{.eps}
\fi

\begin{document}
\title{Safe Autonomous Environmental Contact for Soft Robots using Control Barrier Functions}

\author{Akua K. Dickson$^{1*}$, Juan C. Pacheco Garcia$^{2*}$, Meredith L. Anderson$^{2}$, Ran Jing$^{2}$, Sarah Alizadeh-Shabdiz$^{2}$, Audrey X. Wang$^{2}$, Charles DeLorey$^{3}$, Zach J. Patterson$^{4}$, Andrew P. Sabelhaus$^{1,2}$%
\thanks{Manuscript received: May, 7, 2025; Revised August, 1, 2025; Accepted August, 30, 2025. This paper was recommended for publication by Editor Cecilia Laschi upon evaluation of the Associate Editor and Reviewers' comments. This work was in part supported by the U.S. National Science Foundation under Award No. 2340111, 2209783, and an NSF Graduate Research Fellowship. \textit{(Corresponding Author: Andrew P. Sabelhaus.)}}
\thanks{$^1$A.K. Dickson and A.P. Sabelhaus are with the Division of Systems Engineering, Boston University, Boston MA, USA. {\tt \footnotesize \{akuad, asabelha\}@bu.edu} }
\thanks{$^2$M.L. Anderson, J.C. Pacheco Garcia, R. Jing, S. Alizadeh-Shabdiz, A.X. Wang, and A.P. Sabelhaus are with the Department of Mechanical Engineering, Boston University, Boston MA, USA. {\tt \footnotesize \{merland, jcp29, rjing, sar3, axwang\}@bu.edu} }
\thanks{$^3$C. DeLorey is with the School of Electronic and Electrical Engineering, University of Leeds, Leeds, UK. ({\tt \footnotesize elchd@leeds.ac.uk})}%
\thanks{$^4$Z. Patterson is with the Department of Mechanical and Aerospace Engineering, Case Western Reserve University, Cleveland, OH, USA. ({\tt \footnotesize zpatt@case.edu})}%
\thanks{$^*$ Equal Contribution.}%
% \thanks{$^\dagger$ Corresponding author.}%
\thanks{Digital Object Identifier (DOI): see top of this page.}
}

% The paper headers
%\markboth{Journal of \LaTeX\ Class Files,~Vol.~14, No.~8, August~2015}%
%{Shell \MakeLowercase{\textit{et al.}}: Bare Demo of IEEEtran.cls for IEEE Journals}
\markboth{IEEE Robotics and Automation Letters. Preprint Version. Accepted August, 2025}
{Dickson \MakeLowercase{\textit{et al.}}: Safe Autonomous Environmental Contact for Soft Robots using Control Barrier Functions} 

% make the title area
\maketitle

% As a general rule, do not put math, special symbols or citations
% in the abstract or keywords.
\begin{abstract}
Robots built from soft materials will inherently apply lower environmental forces than their rigid counterparts, and therefore may be more suitable in sensitive settings with unintended contact.
However, these robots' applied forces result from both their design and their control system in closed-loop, and therefore, ensuring bounds on these forces requires controller synthesis for safety as well.
This article introduces the first feedback controller for a soft manipulator that formally meets a safety specification with respect to environmental contact.
In our proof-of-concept setting, the robot's environment has known geometry and is deformable with a known elastic modulus.
Our approach maps a bound on applied forces to a safe set of positions of the robot's tip via predicted deformations of the environment.
Then, a quadratic program with Control Barrier Functions in its constraints is used to supervise a nominal feedback signal, verifiably maintaining the robot's tip within this safe set.
Hardware experiments on a multi-segment soft pneumatic robot demonstrate that the proposed framework successfully maintains a positive safety margin.
This framework represents a fundamental shift in perspective on control and safety for soft robots, implementing a formally verifiable logic specification on their pose and contact forces.
\end{abstract}
\begin{IEEEkeywords}
Modeling, Control, and Learning for Soft Robots, Robot Safety, Motion Control.
\end{IEEEkeywords}
\vspace{-0.4cm}
\IEEEpeerreviewmaketitle

\section{Introduction}
\label{sec:introduction}
Robots built from soft and deformable materials are often claimed to be inherently `safe' or `safer' than their rigid counterparts \cite{majidi_2014_soft_robots}.
The soft robotics community justifies this claim via the robots' compliance to their environments, deforming upon contact, which applies lower external forces from the robot's body \cite{abidi_intrinsic_2017}.
This understanding aligns well with many intended purposes of soft robots, such as medical operations \cite{enayati2016haptics} or other intimate human-robot contact \cite{vasicSafetyIssuesHumanrobot2013}.

In contrast to this informal claim, the control systems community defines `safety' as satisfaction of a logic specification or constraint \cite{li_formal_2019}.
Given a range of acceptable values of some states of a dynamical system, possibly varying or interleaved over time, maintaining these conditions for all time is \textit{safety}: the set of states or signals is invariant under the dynamics \cite{dang_reachability_1998}.
This invariance perspective (equivalently called reachability \cite{akametalu_reachability-based_2014,kochdumper_provably_2023} 
or persistent feasibility) is a powerful tool that has revolutionized legged robot locomotion \cite{ames_control_2014}, aerial vehicles \cite{funada_visual_2019}, and rigid human-robot interaction \cite{rauscher_constrained_2016,landi_safety_2019}.

\begin{figure}[!t]
    \centering
    \includegraphics[width=1.0\columnwidth]{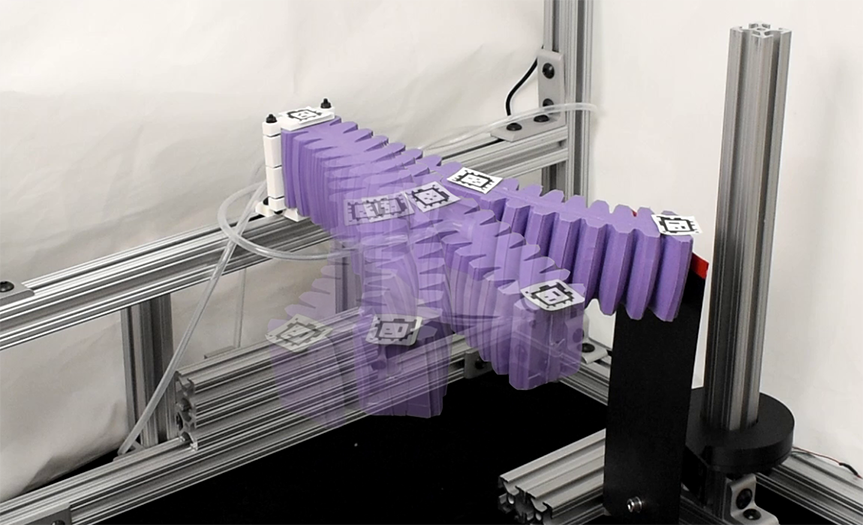}
    \vspace{-0.5cm}
    \caption{This article proposes a feedback control method for a soft robot manipulator (purple) to meet a formal safety specification on its environmental contact forces. We assume the environment (black flexible plate) deforms and is known, and so safe forces map to safe poses. Control barrier functions ensure the robot remains within the set of safe poses.}
    \label{fig:overview}
    \vspace{-0.5cm}
\end{figure}

Prior work attempting to merge these perspectives has shown that \textit{soft robots do not inherently satisfy safety conditions}.
For example, soft actuators can fail by exceeding their operational limits \cite{sabelhaus_safe_2024,anderson_maximizing_2024}, and soft robots can collide with themselves \cite{patterson_safe_2024}. 
Yet no work in soft robot control has proposed a generalizable method for formal safety.
Instead, controllers have focused on stabilization around a desired force or impedance \cite{dyck_impedance_2022,bajo_hybrid_2016}, prevention of mechanical failure \cite{balasubramanian_faulttolerant_2020,anderson_maximizing_2024}, actuator constraints \cite{sabelhaus_safe_2024,feng_safety_2021,garg_autonomous_2025}, or stability and state tracking \cite{li_discrete_2023,della_santina_model-based_2023,bruder_data-driven_2020,haggerty_control_2023,thieffry_control_2019,hachen_non-linear_2025}. 
None of these meet safety constraints on the robot's state or environment: impedance control does not bound forces, static input constraints do not enforce state constraints, and stability is not a replacement for safety \cite{ames_control_2014}.

This manuscript addresses the gap by contributing a first-of-its-kind approach for formal safety in soft robot control, achieving provable set invariance of environmental forces exerted by the robot's end effector (Defn. 1 below).
We adapt the approach of control barrier functions (CBFs) \cite{ames_control_2019}: dynamic constraints on inputs, added to an optimization problem, which ensure invariance of states.
To map states to environmental forces, we consider the proof-of-concept situation where the robot exists in a deformable environment (Fig. \ref{fig:overview}), such as when contacting tissue in a patient, and map deformations to end-effector poses.
We demonstrate that a standard formulation of CBF-based control meets this constraint in both simulation and hardware experiments (Fig. \ref{fig:approach_diagram}), and verify that our controller maintains a positive \textit{safety margin} on force when an open-loop controller does not.

\begin{figure*}[!t]
    \centering
    \vspace{0.2cm}
    \includegraphics[width=1.0\textwidth]{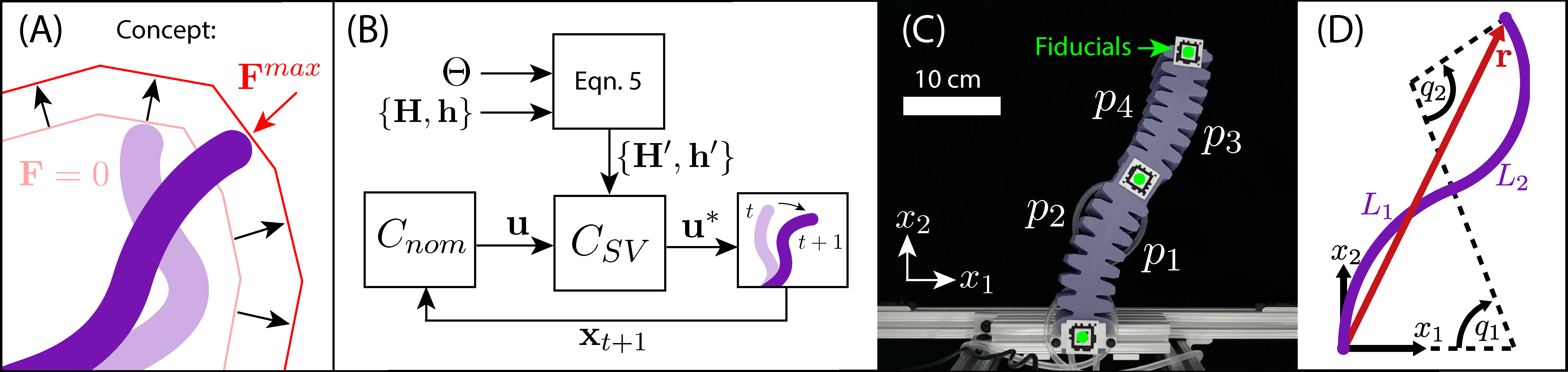}
    \vspace{-0.5cm}
    \caption{Our setup (A) maps a soft robot manipulator's end effector force to its pose by assuming the environment deforms, so a maximum force corresponds to a constraint on the robot's states. Our approach (B) calculates this set as a polytope $\mathbf{H}'\mathbf{r}\leq \mathbf{h}'$ given an undeformed environment surface $\mathbf{H}\mathbf{r} = \mathbf{h}$ and material parameters $\Theta$, where a supervisory controller $C_{SV}$ filters a nominal control signal to maintain the end effector $\mathbf{r}$ in the safe set. Our application (C) is a planar two-segment pneumatic robot with antagonistic actuation chambers $(p_i, p_{i+1})$. Our model (D) is piecewise constant curvature \cite{dellasantina_model_2020}.}
    \label{fig:approach_diagram}
    \vspace{-0.5cm}
\end{figure*}

\vspace{-0.4cm}
\subsection{Paper Contributions}

This article contributes:

\begin{enumerate}
\item A method for mapping environmental forces to a soft manipulator's state space, with a one-to-one set inclusion criterion between them,
\item An adaptation of CBF-based control to a soft robot in this setting, with a proof-by-construction of invariance of the robot's applied forces,
\item A validation in both simulation and hardware, demonstrating a positive safety margin on a set of safe forces.
\end{enumerate} 
\vspace{-0.3cm}
\section{Background and Motivation}
\label{sec:background}

Differences in terminology pervade the discussion of `safety' across the technical disciplines in soft robotics.
The design and modeling communities have considered `safe' motions to be kinematically-feasible workspaces of a soft manipulator \cite{BamdadKinematics2019}.
These `reachable' kinematic workspaces are not formally safe, since they are not \textit{maximum reachable sets} of states under the robot's dynamics \cite{akametalu_reachability-based_2014}.
Other definitions correspond to fault detection \cite{balasubramanian_faulttolerant_2020} or mechanical designs, none of which synthesize control policies for motion.

For control systems, some authors have defined `safety' as a comparison of regions of attraction \cite{feng_safety_2021} or input-to-state stability \cite{stolzle_input--state_2024}, which are not formal conditions as the system may still violate a specification.
Or, considering `safety' as disturbance rejection \cite{patterson_robust_2022} may even be antithetical to softness, as this prevents a soft robot from complying to its environment.
Generally, control of soft robots has yet to settle on a problem statement that de-conflicts the inherent paradox between softness and tracking performance \cite{della_santina_controlling_2017}.

This manuscript proposes a model-based control approach to safety.
Model-free control of soft robots may have good performance on error tracking \cite{bruder_data-driven_2020,haggerty_control_2023,thuruthel_manipulator_2019} but without the provable properties of safety-critical control.
Learned controllers notoriously suffer from unpredictable behaviors in situations outside the training data \cite{pereira_challenges_2020}.

We derive the proposed method for \textit{piecewise constant curvature} (PCC) \cite{dellasantina_model_2020} soft robot dynamics, and apply the method in a proof-of-concept setting: a 2D planar robot, ensuring safety of the end effector, in a known environment.
We adopt this baseline as is common in the development of new controllers for soft robots, as 2D dynamics reduces issues from rotation parameterizations \cite{DellaSantina_OnAnImproved_2020, della_santina_model-based_2023}.
Prior work in force safety in rigid robots has similarly considered the end effector first \cite{rauscher_constrained_2016} before extensions to whole-body safety in unknown environments \cite{landi_safety_2019}.
\vspace{-0.3cm}

\section{Force Safety in Deformable Environments}
\label{sec:safesetcalc}

To the authors' knowledge, this is the first work that allows a soft robot to contact its environment assuming there is some limit of safe force application to that environment.
We take inspiration from prior work in rigid robotics \cite{liu_online_2021} for our definition of force safety, and prior work deformable surfaces \cite{dyck_impedance_2022} for our approach.

\begin{definition}\label{def:forcesafety}
    \underline{\textbf{Force safety.}} Trajectories of a (soft) robot's end effector, $\mathbf{r}(t)$, are force-safe if the applied force at that end effector $\mathbf{F}(t)$ remains within a set of acceptable forces $\mathcal{F}^{safe}$, or equivalently, $\mathcal{F}^{safe}$ is invariant under the system's dynamics: $\mathbf{F}(0) \in \mathcal{F}^{safe} \Rightarrow \mathbf{F}(t) \in \mathcal{F}^{safe} \; \forall t$.
\end{definition}

\vspace{-0.3cm}
\subsection{Problem Setup}

Force safety maps to a set constraint on the robot's kinematics under mild assumptions.
As a proof-of-concept:

\newcounter{assumptionslist}
\begin{enumerate}
    \item The set of manipulator tip positions  $\mathbf{r} \in \mathbb{R}^2$ with no environmental contact is represented by a polytope with $P$-many facets, $\mathcal{N} = \{\mathbf{r} | \mathbf{H} \mathbf{r} \leq \mathbf{h}\}$, with $\mathbf{H}\in \mathbb{R}^{P\times 2}, \mathbf{h} \in \mathbb{R}^P$ having no redundant edges. 
    \item Each facet of the polytope, i.e. $\mathcal{L}_i = \{\mathbf{r} | \mathbf{H}_i \mathbf{r} = h_i\}$, deforms (only) in the normal direction, $\mathbf{\hat n}_i \perp \mathcal{L}_i$.
    \item Surface(s) are deflected to the location of the tip $\mathbf{r}(t)$ in space, $\mathcal{L}_i' = \{\mathbf{r} | \mathbf{H}_i' \mathbf{r} = h_i'\}$ where $\mathbf{H}_i'$ and $h_i'$ are such that $\mathbf{n}_i = n_i \mathbf{\hat n}_i = ||\mathbf{r}(t) - \text{proj}_{\mathcal{L}_i} \mathbf{r}(t)||$ is the normal.
    \item The force-deformation relationship is $\mathbf{F}_i(n_i) = \psi_i(n_i) \mathbf{n}_i$. We only consider normal forces.
    \item The scalar function $\psi_i(\cdot) : \mathbb{R} \mapsto \mathbb{R}$ is strictly monotonic for $n_i > 0$ and is $\psi = 0$ for $n \leq 0$.
    \item The maximum safe normal forces are known, so $\mathcal{F}^{safe} = \{ \mathbf{F} \; | \; \; ||\mathbf{F}_i|| \leq F^{max}_i \; \forall i\}$ with $\mathbf{F} = \sum_i \mathbf{F}_i$.
    \setcounter{assumptionslist}{\value{enumi}}
\end{enumerate}

\noindent Our calculations simplify assuming uniform properties of each facet, chosen in this work for clarity of exposition:

\begin{enumerate}
    \setcounter{enumi}{\value{assumptionslist}}
    \item $\psi_i(\cdot) = \psi(\cdot), \; F^{max}_i = F^{max},  \forall i=1...P$
\end{enumerate}

\noindent Fig. \ref{fig:expansion_diagram} is helpful in interpreting these geometric quantities.

\begin{figure}[!t]
    \centering    
    \includegraphics[width=0.8\columnwidth]{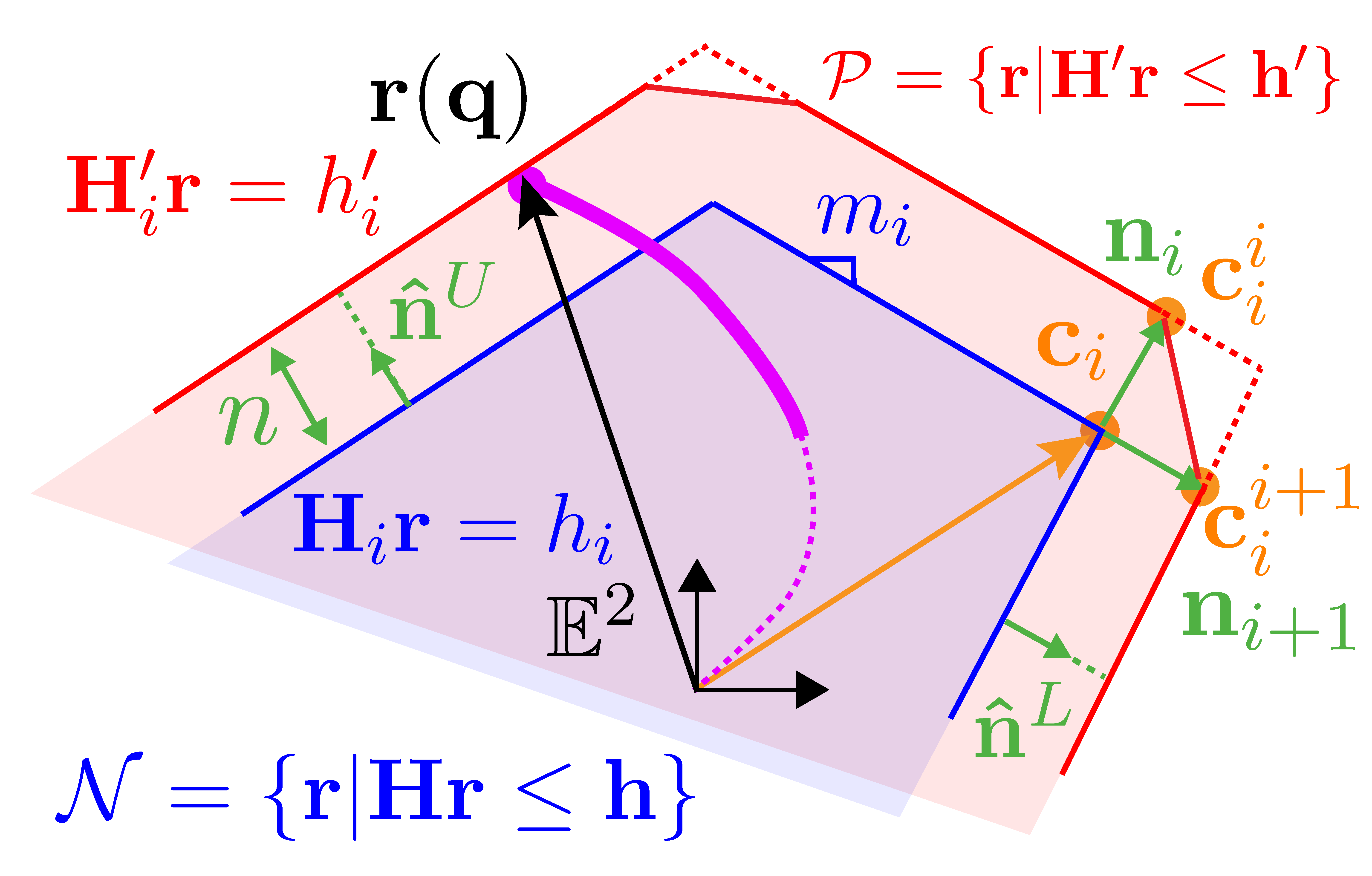}
    \caption{The deformable environment is represented by a no-contact (free space) set, $\mathcal{N}$. We assume each face deforms in its normal direction, $\mathbf{\hat n}^U$ or $\mathbf{
    \hat n}^L$ depending on inequality, upon contact with the end effector. By calculating the maximum deflection $n=n^{max}$ based on a force limit, we convert $\mathcal{N}$ into $\mathcal{P}$, the end effector poses $\mathbf{r}(\mathbf{q})$ that apply less than maximum force. Our approach also adds an additional hyperplane per vertex during the set expansion, between $\mathbf{c}_i^i$ to $\mathbf{c}_i^{i+1}$, to conservatively bound the environment's force when two hyperplanes are deflected.}
    \label{fig:expansion_diagram}
    \vspace{-0.5 cm}
\end{figure}

These assumptions are not inherent limitations of the concept.
Relaxing (1)-(3) corresponds to finer discretizations of $\mathcal{N}$.
A positive Poisson ratio and elastic modulus are sufficient for assumptions (4)-(5).
Assumption (6) has been demonstrated in prior work on tissue mechanics \cite{fujikawa_critical_2017}.
\vspace{-0.3 cm}

\subsection{Construction of a Safe Contact Set}
\label{sec:construction_of_safe_contact_set}
We now calculate a task-space set that maps to $\mathcal{F}^{safe}$:

\vspace{-0.2cm}

\begin{equation}\label{eqn:safeS_forces}
\mathcal{P} = \{\mathbf{r} \;| \; ||\mathbf{F}_i(\mathbf{r})|| \leq F^{max}, \; \; \forall i=1\hdots P\},
\end{equation}

\vspace{-0.2cm}

\noindent i.e., locations in space that the robot's tip can enter while never exerting greater than $F^{max}$, so $\mathcal{P} = \{\mathbf{r} | \mathbf{F}(\mathbf{r}) \in \mathcal{F}^{safe}\}$.

To simplify calculations without loss of generality, we perform a preprocessing step.
Assuming some original $\mathbf{H}_{o}$ and $\mathbf{h}_{o}$ are given for $\mathcal{N}$, 
\vspace{-0.2 cm}
\begin{align}
    & \mathbf{N} = \left(\text{diag}\left(\mathbf{H}_{o} \begin{bmatrix} 0 \\ 1 \end{bmatrix} \right) \right)^{-1}, \quad
    \mathbf{H} = \mathbf{N} \mathbf{H}_{o}, \; \mathbf{h} = \mathbf{N} \mathbf{h}_{o}, \\
    & \therefore \quad \mathbf{H} \mathbf{r} = \mathbf{h} \quad \Rightarrow \quad \begin{bmatrix}
        -m_1 & 1 \\ \vdots \\ m_P & -1
    \end{bmatrix} \mathbf{r} = \begin{bmatrix}
        h_1 \\ \vdots \\ h_P
    \end{bmatrix}
\end{align}

\noindent Readers can interpret this calculation as dividing each row by the element in the second column of $\mathbf{H}_o$, i.e., the facets are now in slope-intercept form.
The lower block of $\mathbf{H}$ corresponds to constraints of the form $\mathbf{H}_i \mathbf{r} \geq h_i$.

Converting the implicit set representation of $\mathcal{P}$ into an explicit representation is trivial given our assumptions.

\begin{lem}\label{lem:safeS_r}
    \underline{\textbf{Force-Safe Kinematics Set.}} The set $\mathcal{P}$ in eqn. (\ref{eqn:safeS_forces}) is equivalent to
    \vspace{-0.3 cm}
    \begin{equation}\label{eqn:safeS_r}
        \mathcal{P} = \{\mathbf{r} | \mathbf{H}' \mathbf{r} \leq \mathbf{h}'\}
    \end{equation}
    \noindent given parameters $\Theta=\{\psi,F^{max}\}$, where each $\mathbf{H}_i'$, $h_i'$ represent the $\mathcal{L}'_i$ with a deformation $n_i = n^{max} =\psi^{-1}(F^{max})$ and:
    \vspace{-0.2 cm}
    \begin{equation}
        \mathbf{H}_i' = \mathbf{H}_i, \quad \quad h_i' = h_i + n^{max} \sqrt{m^2+1}.
    \end{equation} 
\end{lem}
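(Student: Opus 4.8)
The plan is to show the two set descriptions of $\mathcal{P}$ coincide by arguing facet-by-facet, reducing everything to a one-dimensional statement about the scalar deformation $n_i$. First I would fix an index $i$ and a point $\mathbf{r}$, and observe that by Assumption (2) the only deformation of facet $\mathcal{L}_i$ is along $\mathbf{\hat n}_i$, so the relevant scalar is $n_i = \mathrm{dist}(\mathbf{r},\mathcal{L}_i)$ when $\mathbf{r}$ lies on the contact side of $\mathcal{L}_i$, and $n_i \le 0$ otherwise. Using Assumptions (4)--(5), $\|\mathbf{F}_i(\mathbf{r})\| = \psi(n_i)$ for $n_i>0$ and $0$ for $n_i\le 0$; since $\psi$ is strictly monotonic (increasing) on $n_i>0$ and $F^{max}>0$, the constraint $\|\mathbf{F}_i(\mathbf{r})\|\le F^{max}$ is equivalent to $n_i \le \psi^{-1}(F^{max}) =: n^{max}$ (the case $n_i\le 0$ is trivially included since $n^{max}>0$). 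So membership in $\mathcal{P}$ is exactly the conjunction of the $P$ scalar conditions $n_i(\mathbf{r})\le n^{max}$.

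Next I would convert each scalar condition $n_i(\mathbf{r})\le n^{max}$ into the claimed halfspace. After the preprocessing normalization, row $i$ of $\mathbf{H}$ has the form $(-m_i,\,1)$ (or $(m_i,\,-1)$ for the lower block), so $\|\mathbf{H}_i\| = \sqrt{m_i^2+1}$. The signed distance from $\mathbf{r}$ to $\mathcal{L}_i = \{\mathbf{r}\mid \mathbf{H}_i\mathbf{r}=h_i\}$, measured positively toward the contact (free-space-exterior) side, is $(\mathbf{H}_i\mathbf{r}-h_i)/\|\mathbf{H}_i\|$; this is precisely $n_i(\mathbf{r})$ on the contact side, and is $\le 0$ exactly on the free-space side, matching the piecewise definition. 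Hence $n_i(\mathbf{r})\le n^{max} \iff \mathbf{H}_i\mathbf{r}-h_i \le n^{max}\sqrt{m_i^2+1} \iff \mathbf{H}_i\mathbf{r} \le h_i + n^{max}\sqrt{m_i^2+1} = h_i'$, with $\mathbf{H}_i' = \mathbf{H}_i$. Conjoining over $i$ gives $\mathcal{P} = \{\mathbf{r}\mid \mathbf{H}'\mathbf{r}\le \mathbf{h}'\}$, which is the claim; I would also note that under Assumption (7) the uniform $\psi$ and $F^{max}$ make $n^{max}$ common to all facets.

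The one subtlety — and the step I expect to be the main obstacle to state cleanly rather than technically hard — is justifying that $n_i(\mathbf{r})$ really equals the signed point-to-hyperplane distance, i.e. that $\mathrm{proj}_{\mathcal{L}_i}\mathbf{r}$ in Assumption (3) is the orthogonal projection and that the deflected surface $\mathcal{L}_i'$ is the hyperplane parallel to $\mathcal{L}_i$ passing through $\mathbf{r}$. This is where the "deforms only in the normal direction" assumption does the work: it guarantees the deformed facet stays parallel to the original, so $h_i' - h_i$ is constant along the facet and equal to $\|\mathbf{H}_i\|\,n^{max}$. I would remark that for a point $\mathbf{r}$ whose orthogonal projection falls outside the actual (bounded) facet, contact is instead governed by a neighboring facet or by the per-vertex hyperplanes mentioned in Fig. \ref{fig:expansion_diagram}; since Lemma \ref{lem:safeS_r} is stated for the halfspace (full-hyperplane) idealization of each $\mathcal{L}_i$, this does not affect the equivalence claimed here, and the vertex hyperplanes are an additional conservative tightening handled separately.
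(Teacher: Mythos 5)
Your proposal is correct and follows essentially the same route as the paper's proof: reduce $\|\mathbf{F}_i\|\le F^{max}$ to the scalar condition $n_i\le n^{max}=\psi^{-1}(F^{max})$ via invertibility of $\psi$, then translate each normalized facet by $n^{max}$ along its unit normal to obtain $h_i'=h_i+n^{max}\sqrt{m_i^2+1}$ with $\mathbf{H}_i'=\mathbf{H}_i$. The only cosmetic difference is that you compute the offset via the signed point-to-hyperplane distance $(\mathbf{H}_i\mathbf{r}-h_i)/\|\mathbf{H}_i\|$ while the paper parameterizes the translated line explicitly, and your closing remark about projections landing outside a bounded facet is exactly the issue the paper defers to its separate vertex-correction step.
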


\begin{proof}
    Consider row $i$ so that $\mathbf{H}_i = [-m_i \; \; 1]$ (or respectively, $m_i, \; -1$).
    A short calculation reveals two unit vectors normal to the line $\mathbf{H}_i \mathbf{h} = h_i$,
    \begin{equation}
        \mathbf{\hat n}_i^U = \frac{1}{\sqrt{m_i^2+1}}\begin{bmatrix}
            -m_i \\ 1
        \end{bmatrix}, \quad \quad \mathbf{\hat n}_i^L = -\mathbf{\hat n}^U_i.
    \end{equation} 
    \noindent Here, $\mathbf{\hat n}^U$ is in positive $\mathbf{E}_2$, as $\mathbf{\hat n}^U\cdot\mathbf{E}_2 > 0$, and vice-versa.
    
    Then consider $\mathbf{n}_i=n_i\mathbf{\hat n}_i$. 
    Since $\psi>0$ is invertible per (4)-(5), $\psi(n_i) \leq F^{max} \Rightarrow n_i \leq \psi^{-1}(F^{max})$ per assumptions (5)-(6). Setting $n^{max} = \psi^{-1}(F^{max})$, our safety inequality becomes $n_i \leq n^{max} \; \; \forall i$, i.e., $\mathbf{n}_i^{max}=n^{max}\mathbf{\hat n}_i$.
    A point on the facet $\mathbf{H}_i \mathbf{r} = h_i$ translated by $\mathbf{n}_i^{max}$, choosing $(\cdot)^U$ or $(\cdot)^L$ accordingly, is $\mathbf{r} = s \; [1 \; \; m_i]^\top + [0 \; \; h_i]^\top + n^{max} \mathbf{\hat n}_i$, where $s$ is an arbitrary parameter.
    With some arithmetic,
    \begin{equation}\label{eqn:hi_prime}
    \begin{bmatrix}
        -m_i & 1
    \end{bmatrix} \mathbf{r} = h_i + n^{max} \sqrt{m_i^2+1}    
    \end{equation}
    \noindent for both $\mathbf{\hat n}^U$ and $\mathbf{\hat n}^L$.
    Observe $h_i' = h_i + n^{max} \sqrt{m_i^2+1}$ and $\mathbf{H}_i' = \mathbf{H}_i$, so by construction:
    \[
    \mathbf{H}'_i \mathbf{r} \leq h'_i \Rightarrow n(\mathbf{r})_i \leq n^{max} \Rightarrow||\mathbf{F}_i|| \leq F^{max}.
    \]
    \noindent Apply eqn. (\ref{eqn:hi_prime}) for all rows $i$ to obtain the intersection $\mathbf{H}' \mathbf{r} \leq \mathbf{h}'$ where $||\mathbf{F}_i|| \leq F^{max} \; \forall i$, which is $\mathcal{P}$ in eqn. (\ref{eqn:safeS_r}). 
\end{proof}

\begin{remark}
    Lemma \ref{lem:safeS_r} is a mathematically formal way of stating our intuition: ``translate each line in $\mathcal{N}$ outwards by a distance $n^{max}$ to obtain $\mathcal{P}$.''
\end{remark}
\vspace{-0.3cm}
\subsection{A Correction at the Vertices}

In some instances, the set $\mathcal{P}$ does contain regions where $||\mathbf{r} - \text{proj}_\mathcal{N} \mathbf{r}|| > n^{max}$: the tip is too far away from the no-contact set.
In particular, when $\text{proj}_\mathcal{N} \mathbf{r}$ is one of the vertices in $\mathcal{N}$, multiple constraints are active ($\mathbf{F} = \mathbf{F}_i + \mathbf{F}_{i+1}$) and $n_i \leq n^{max} \; \forall i$ is insufficient.
We propose a conservative under-approximation of the safe region that maintains convexity and linearity of the calculation.
Define the two vectors $\mathbf{c}_i^i = \mathbf{c}_i + \mathbf{n}_i$ and $\mathbf{c}_i^{i+1} = \mathbf{c}_i + \mathbf{n}_{i+1}$ per Fig. \ref{fig:expansion_diagram}.
Then, concatenate row $P+i+1$ to $\{\mathbf{H}',\mathbf{h}'\}$ with $m$ and $h$ as the slope/intercept of the line passing through these points, with a sign correction as needed ($\mathbf{\hat n}^L=-\mathbf{\hat n}^U$).
Then $n_{P+i+1} \leq n^{max}$, satisfying Lemma \ref{lem:safeS_r} by eliminating the dashed regions in Fig. \ref{fig:expansion_diagram} from $\mathcal{P}$.

\vspace{-0.3cm}
\section{Kinematics and Dynamics Model}
\label{sec:KinematicsandDynamics}

For our proof-of-concept, we use the piecewise constant curvature (PCC) kinematics of a soft robot manipulator with the augmented body model of Della Santina et al. \cite{dellasantina_model_2020} to convert our task-space force safety into state space, giving both $\mathbf{r}(\mathbf{q})$ with states $\mathbf{q} \in \mathbb{R}^N$ as well as $\dot{\mathbf{x}} = f(\mathbf{x}, \mathbf{u})$.
We briefly summarize, deferring to prior work for details.

Here, the robot is represented with one subtended angle $q_i$ per CC segment of length $L_i$ for $i=1\hdots N$, see Fig. \ref{fig:approach_diagram}.
PCC kinematics admit a transformation matrix $\mathbf{T}_i^{i+1}(\mathbf{q})$ from one CC segment to the next, giving the end effector position as $\mathbf{r}(\mathbf{q}) = (\sum_{i=0}^{N-1}\mathbf{T}_i^{i+1}(\mathbf{q})) \mathbf{r}_0$.

For the dynamics, the augmented body model represents one CC segment as a series of $z$-many translation and rotation joints, states $\xi_i \in \mathbb{R}^z$. 
A user then selects a kinematic constraint $m_i(\cdot):\mathbb{R} \mapsto \mathbb{R}^{z}$, where $m_i(q_i) = \xi_i$, aligning e.g. the center of mass of the PCC segment.
For this manuscript, we choose Della Santina et. al.'s RPPR augmented body, $m_i = [\frac{q_i}{2}, \; L_i \frac{sin(q_i)/2}{q_i},\; L_i \frac{sin(q_i)/2}{q_i}, \frac{q_i}{2}]^\top$, with one mass between the prismatic joints.
Some applications of the chain rule on the full $N$-segment state, $\xi = m(\mathbf{q})$, give $\dot{\xi} = \mathbf{J}_m(\mathbf{q})\dot{\mathbf{q}}$ and $\ddot{\xi} = \dot{ \mathbf{J}_m}(\mathbf{q},\dot{\mathbf{q}})\dot{\mathbf{q}} + \mathbf{J}_m(\mathbf{q})\ddot{\mathbf{q}}$.
Inserting these Jacobians into the manipulator equation for $\ddot \xi$, assuming some elastic damping and spring forces at each joint, converts a $Nz$-dimensional ODE into our $N$-dimensional expression:
\begin{equation}\label{eqn:softdynamics_noinput}
    \mathbf{M}(\mathbf{q})\ddot{\mathbf{q}} + \mathbf{C}(\mathbf{q}, \dot{\mathbf{q}})\dot{\mathbf{q}}+ \mathbf{D}\dot{\mathbf{q}} + \mathbf{K}\mathbf{q} = \mathbf{J}^\top f_{ext}
\end{equation}
For the control input, past work \cite{dellasantina_model_2020} demonstrates an approximation of pneumatic actuation as external forces, $\mathbf{J}^\top f_{ext} = \mathbf{\tau}$, viewed as virtual torques on the body.
And as in past work \cite{della_santina_controlling_2017,dickson_real-time_2025}, we calibrate these virtual torques using a least squares fit to pneumatic pressures $\mathbf{u} \in \mathbb{R}^N$.
So, $\mathbf{\tau} = \Lambda\mathbf{u}$ with $\Lambda = \text{diag}([\lambda_1, \; \hdots, \; \lambda_N])$ fit from data.

Minor rearranging gives the final equations of motion with states $\mathbf{x} = [\mathbf{q}^\top \; \dot{\mathbf{q}}^\top]^\top$ as $\dot{\mathbf{x}} = f(\mathbf{x}) + g(\mathbf{x})\mathbf{u}$, with:
\begin{equation}\label{eqn:dynamics}
 f = \begin{bmatrix}
\dot{\mathbf{q}} \\
\mathbf{M}^{-1} \left(\mathbf{C} \dot{\mathbf{q}} + \mathbf{D} \dot{\mathbf{q}} + \mathbf{K} \mathbf{q} \right)
\end{bmatrix}, \quad
g = \begin{bmatrix}
0 \\
\mathbf{M}^{-1} \Lambda
\end{bmatrix}   .
\end{equation}
\noindent These dynamics are control-affine, key to the method below.

\vspace{-0.2cm}
\section{Control of Soft Robot Manipulator Forces Using Control Barrier Functions} \label{sec:Safety-Critical}

We now synthesize a control system to render $\mathcal{P}$ invariant, given the control-affine $f(\mathbf{x})$ and $g(\mathbf{x})$.

\vspace{-0.3cm}

\subsection{Control Barrier Functions For Force Safety}
\label{sec:CBFforforcesafety}

Many techniques can enforce invariance of a set in a control-affine dynamical system, but the approach of \textit{control barrier functions} (CBFs) has the benefit of real-time feedback, possible by formulating a quadratic program \cite{ames_control_2014}.
To apply CBFs to our problem, we follow Rauscher et al. \cite{rauscher_constrained_2016} in defining a set of scalar constraints $b_i(\cdot) : \mathbb{R}^{2N} \mapsto \mathbb{R}$, such that the set of safe states $\mathcal{C}$ is defined as superlevel sets of these constraints, $\mathcal{C} = \{\mathbf{x} \; | \; b_i(\mathbf{x}) > 0 \; \forall i\}$.
For our problem statement, these are the rows of the polytope $\mathcal{P}$ with the forward kinematics inserted,
\begin{equation}\label{eqn:b_i}
b_i(\mathbf{x}) = h_i' - \mathbf{H}_i'\mathbf{r}(\mathbf{q}), \quad \quad i=1\hdots P.
\end{equation}
Next, we must define a control barrier function $B_i(\mathbf{x})$ per constraint, each of which must be the inverse of a class-$\kappa$ function and be such that $\mathcal{L}_g B_i(\mathbf{x}) \neq 0$, where $\mathcal{L}$ is the Lie derivative.
We refer to prior work for more detailed justifications \cite{rauscher_constrained_2016} (Defn. 3-4).
Considering our soft robot manipulator with dynamics of the form in eqn. (\ref{eqn:softdynamics_noinput})-(\ref{eqn:dynamics}), our constraint function $b_i(\mathbf{r}(\mathbf{q}))$ has a relative degree two since $\ddot{b}_i$ depends on the system's control input $\mathbf{u}$. 
Consequently, we adopt the barrier function from \cite{rauscher_constrained_2016,Hsu2015Control},
\begin{equation}
\label{eq:barrierfunction}
B_i(\mathbf{x}) = - \ln \left( \frac{b_i(\mathbf{x})}{1 + b_i(\mathbf{x})} \right) + a_E\frac{b_E \, \dot{b_i}(\mathbf{x})^2}{1 + b_E \, \dot{b_i}(\mathbf{x})^2},
\end{equation}
\noindent where $b_i(\mathbf{x})$ and $\dot{b}_i(\mathbf{x})$ are the constraint function and its derivative respectively while $a_E$ and $b_E$ are tuning constants.
Prior work has shown that eqn. (\ref{eq:barrierfunction}) is a valid CBF for relative degree two systems.
We note that small values of $a_E$, $b_E > 0$ result in more conservative behavior of the soft robot manipulator's end effector, while larger values allow for less restrictive motion near the safety constraint. 

Using this barrier function, we can write a constraint that, if satisfied by a control input $\mathbf{u}$, ensures that the system states $\mathbf{x}$ remain within the $\mathcal{C}$ set (per \cite{patterson_safe_2024,rauscher_constrained_2016,ames_control_2014,ames_control_2019}),
\begin{equation}\label{eqn:CBFconstraint}
L_f B_i(\mathbf{x}) + L_g B_i(\mathbf{x}) \, \mathbf{u} - \frac{\gamma_i}{B_i(\mathbf{x})} \leq 0 \quad \forall i,
\end{equation}
\noindent where $\gamma_i > 0$ is an additional tuning constant that indicates the rate of growth of the barrier function (higher $\gamma$ is less conservative).
Using the $f$ and $g$ from the dynamics in eqn. (\ref{eqn:dynamics}), these Lie derivatives become:
\begin{equation}
\mathcal{L}_f B = \mathbf{J}_B f = \mathbf{J}_B \dot{\mathbf{x}}, \quad \mathcal{L}_g B = \mathbf{J}_B g = \frac{\partial B}{\partial \dot{\mathbf{q}}}\mathbf{M}^{-1}\Lambda
\end{equation}
\noindent where $\mathbf{J}_B$ is the Jacobian of the barrier function \eqref{eq:barrierfunction}, taking advantage of the robot's kinematics as $\mathbf{J}_B = [\partial B/\partial \mathbf{q}^\top, \; \; \partial B/\partial\dot{\mathbf{q}}^\top]^\top$.

\vspace{-0.3cm}
\subsection{Feedback Controller with Control Barrier Functions}

Our goal then becomes calculation of a feedback control law $\mathbf{u}(\mathbf{x})$ that satisfies eqn. (\ref{eqn:CBFconstraint}) while also attempting to perform some nominal task.
Here, we assume that there is an existing (unsafe) nominal controller, $\mathbf{u}_{nom}(\mathbf{x})$, for example the computed torque law from \cite{dellasantina_model_2020}.
Following the literature \cite{rauscher_constrained_2016}, we adopt a quadratic program (QP) that minimizes the squared norm of the error between this nominal signal and the applied signal, $||\mathbf{u} - \mathbf{u}_{nom}||^2 = \mathbf{u}^\top \mathbf{u} - 2 \mathbf{u}^\top \mathbf{u}_{nom} + \mathbf{u}_{nom}^\top \mathbf{u}_{nom}$.
Dropping the constant term and inserting eqn. (\ref{eqn:CBFconstraint}) as a constraint, our calculation becomes:
\begin{equation}
\begin{aligned}
\label{eq:quadprog}
\mathbf{u}^*(\mathbf{x}) = \; &\arg\min_{\mathbf{u}} && \mathbf{u}^\top \mathbf{u} - 2 \mathbf{u}^\top \mathbf{u}_{nom} \\
&\text{s. t.} && \mathbf{A}(\mathbf{x}) \mathbf{u} \leq \mathbf{b}(\mathbf{x})
\end{aligned}
\end{equation}
\noindent where $\mathbf{u}$ is our decision variable since $\mathbf{x}$ is held constant at each timestep, and
\begin{equation}
\begin{aligned}
\mathbf{A}(\mathbf{x}) &= 
\begin{bmatrix}
\mathcal{L}_{g_1} B_1 \; \cdots \; \mathcal{L}_{g_N} B_1 \\
\vdots \quad\quad\quad \vdots \\
\mathcal{L}_{g_1} B_P \; \cdots \; \mathcal{L}_{g_N} B_P
\end{bmatrix}, &
\mathbf{b}(\mathbf{x}) = 
\begin{bmatrix}
\frac{\gamma_1}{B_1} - \mathcal{L}_f B_1 \\
\vdots \\
\frac{\gamma_P}{B_P} - \mathcal{L}_f B_P
\end{bmatrix}
\end{aligned}
\tag{15}
\end{equation}
\noindent This controller satisfies the following safety condition:
\begin{theorem}\label{thm:invarianceP}
    \ul{\textbf{Force Safety Under CBF-Based QP Feedback.}} If a soft robot governed by PCC kinematics and the dynamics of eqn. (\ref{eqn:softdynamics_noinput})-(\ref{eqn:dynamics}) starts at $\mathbf{q}(0) \in \mathcal{C}$ defined by eqn. (\ref{eqn:b_i}), applying the control law $\mathbf{u}(\mathbf{x}(t)) = \mathbf{u}^*(\mathbf{x}(t))$ by solving eqn. (\ref{eq:quadprog}) renders the set $\mathcal{P}$ from eqn. (\ref{eqn:safeS_r}) invariant, i.e., $\mathbf{F}(t) \in \mathcal{F}^{safe} \; \; \forall t>0$.
\end{theorem}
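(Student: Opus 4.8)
The plan is to chain three facts: (i) membership in $\mathcal{C}$ is, by construction, membership of the tip $\mathbf{r}(\mathbf{q})$ in the polytope $\mathcal{P}$; (ii) the QP constraint forces each barrier function $B_i$ to grow at most linearly in time, so it can never diverge and hence $b_i$ never reaches zero; and (iii) Lemma~\ref{lem:safeS_r}, together with the vertex correction, then converts $\mathbf{r}(t)\in\mathcal{P}$ into $\mathbf{F}(t)\in\mathcal{F}^{safe}$, which is exactly Definition~\ref{def:forcesafety}. So the theorem reduces to a standard control-barrier-function invariance argument for a relative-degree-two system, plus bookkeeping against the earlier results.

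First I would record that $b_i(\mathbf{x}) = h_i' - \mathbf{H}_i'\mathbf{r}(\mathbf{q})$ is smooth on the region where the PCC kinematics are defined (the $\sin(q)/q$ terms extend analytically through $q=0$), and that $\mathcal{C}=\{b_i>0\ \forall i\}$ is precisely $\{\mathbf{r}(\mathbf{q}) \text{ in the interior of } \mathcal{P}\}$ when the $P$ facet rows and the $P$ vertex-correction rows of $\{\mathbf{H}',\mathbf{h}'\}$ are included. Note $B_i(\mathbf{x})>0$ whenever $b_i(\mathbf{x})>0$, since $-\ln(b_i/(1+b_i))>0$ for $b_i\in(0,\infty)$ and the second term of eqn.~(\ref{eq:barrierfunction}) is nonnegative, and $B_i\to+\infty$ as $b_i\to 0^+$; this makes $B_i$ an inverse class-$\kappa$ barrier. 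Then verify the relative-degree-two structure: $\dot b_i = -\mathbf{H}_i'(\partial\mathbf{r}/\partial\mathbf{q})\dot{\mathbf{q}}$ carries no input, while $\ddot b_i$, hence $\dot B_i$, does, which is why the augmented barrier of eqn.~(\ref{eq:barrierfunction}) is used and why $\mathcal{L}_g B_i = (\partial B_i/\partial\dot{\mathbf{q}})\mathbf{M}^{-1}\Lambda \neq 0$ on $\mathcal{C}$ (using invertibility of $\mathbf{M}$ and of $\Lambda$).

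Next is the core step. At each $\mathbf{x}(t)\in\mathcal{C}$ the QP of eqn.~(\ref{eq:quadprog}) has only the half-space constraints $\mathbf{A}(\mathbf{x})\mathbf{u}\le\mathbf{b}(\mathbf{x})$; assuming this feasible set is nonempty (the main obstacle, discussed below), the minimizer $\mathbf{u}^*(\mathbf{x})$ exists and satisfies every row, i.e. eqn.~(\ref{eqn:CBFconstraint}) holds for all $i$. Along the closed-loop trajectory, $\dot B_i = L_f B_i + L_g B_i\,\mathbf{u}^*$, so eqn.~(\ref{eqn:CBFconstraint}) reads $\dot B_i(\mathbf{x}(t)) \le \gamma_i / B_i(\mathbf{x}(t))$; multiplying by $B_i>0$ gives $\frac{d}{dt}\big(\frac12 B_i^2\big) = B_i\dot B_i \le \gamma_i$, hence $B_i(\mathbf{x}(t))^2 \le B_i(\mathbf{x}(0))^2 + 2\gamma_i t < \infty$ for every finite $t$. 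Since $B_i$ stays finite, $b_i(\mathbf{x}(t))$ is bounded away from $0$ on each compact time interval; as $b_i(\mathbf{x}(0))>0$ and $b_i$ is continuous, $b_i(\mathbf{x}(t))>0$ for all $t\ge 0$, and a standard maximal-interval/continuation argument upgrades this to all time: the trajectory cannot exit $\mathcal{C}$. Therefore $\mathbf{r}(\mathbf{q}(t))\in\mathcal{P}$ for all $t$, and Lemma~\ref{lem:safeS_r} plus the vertex correction gives $\|\mathbf{F}_i(\mathbf{r}(t))\|\le F^{max}$ for every facet, i.e. $\mathbf{F}(t)\in\mathcal{F}^{safe}$, which by Definition~\ref{def:forcesafety} is force safety.

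The main obstacle I anticipate is establishing feasibility of the QP for every $\mathbf{x}\in\mathcal{C}$: with $P$ (plus vertex) rows and only $N$ inputs the constraints could in principle conflict. I would handle this by noting each constraint is a single half-space in $\mathbf{u}$ with normal $\mathcal{L}_g B_i \neq 0$, and that a constraint is only active near the corresponding facet, where the relevant sub-system is pointwise controllable in $\mathbf{r}$, so a common admissible $\mathbf{u}$ exists; alternatively one invokes the standard fallback of a slack variable or a sufficiently small $\gamma_i$, or restricts to configurations where at most $N$ constraints are simultaneously active, which is generic for the convex polytope $\mathcal{P}$. A secondary technical point is confirming that the $B_i$ of eqn.~(\ref{eq:barrierfunction}) is a valid relative-degree-two CBF for our $f,g$ in the sense of \cite{rauscher_constrained_2016,Hsu2015Control} --- i.e. that $\dot B_i$ is affine in $\mathbf{u}$ with the stated Lie derivatives --- which is a direct chain-rule computation I would not reproduce in full.
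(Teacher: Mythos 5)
Your proof is correct and reaches the theorem by the same overall decomposition the paper uses: (a) the QP constraint keeps $\mathcal{C}$ invariant, (b) $\mathbf{x}\in\mathcal{C}$ means $\mathbf{r}(\mathbf{q})\in\mathcal{P}$ by the definition of $b_i$ in eqn.~(\ref{eqn:b_i}), and (c) Lemma~\ref{lem:safeS_r} converts $\mathbf{r}(t)\in\mathcal{P}$ into $\mathbf{F}(t)\in\mathcal{F}^{safe}$. The one substantive difference is in step (a): the paper treats it as a black box, citing Theorem~1 of Rauscher et al.\ \cite{rauscher_constrained_2016}, whereas you reprove it from scratch via the differential inequality $\dot B_i \le \gamma_i/B_i \Rightarrow \frac{d}{dt}\bigl(\tfrac12 B_i^2\bigr)\le\gamma_i \Rightarrow B_i(t)^2\le B_i(0)^2+2\gamma_i t$, so $B_i$ stays finite on finite intervals and $b_i$ stays positive (note your prose says $B_i$ grows ``at most linearly,'' while your computation actually bounds $B_i^2$ linearly, i.e.\ $B_i=O(\sqrt{t})$ --- the conclusion is unaffected). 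Your self-contained version buys transparency about exactly which properties of $B_i$ are used (positivity on $\mathcal{C}$, blow-up as $b_i\to0^+$, affinity of $\dot B_i$ in $\mathbf{u}$), at the cost of re-deriving a standard reciprocal-CBF result. You also explicitly surface the QP-feasibility question as the remaining gap; the paper makes exactly the same omission, deferring ``admissibility'' and ``set intersections of multiple constraints'' to prior work in Remark~2, so your treatment is no less rigorous than the paper's and is more candid about where the argument leans on assumptions.
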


\begin{proof}
    From Theorem 1 of Rauscher et al. \cite{rauscher_constrained_2016}, the control law of eqn. (\ref{eq:quadprog}) and barrier function (\ref{eq:barrierfunction}) will ensure that $b_i(\mathbf{x}(t)) > 0  \; \forall t$.
    We have defined $\mathcal{C}$ as this set, now invariant.
    Correspondingly, $\mathbf{x} \in \mathcal{C} \Rightarrow \mathbf{r}\in \mathcal{P}$.
    By Lemma \ref{lem:safeS_r}, $\mathbf{r}(t) \in \mathcal{P} \; \forall t$ meets Definition \ref{def:forcesafety}.
\end{proof}

\begin{remark}
    Readers are referred to prior work for details such as Lipschitz continuitiy conditions, admissibility, set intersections of multiple constraints, etc., as Theorem \ref{thm:invarianceP} does not introduce any novel theoretical considerations.
\end{remark}

\section{Simulation Validation}

Our approach was validated in a dynamics simulation of eqn. (\ref{eqn:softdynamics_noinput})-(\ref{eqn:dynamics}).
Results show that Theorem \ref{thm:invarianceP} holds.

\subsection{Simulation Setup and Calibration}

This manuscript considers soft robots which are control-affine in their dynamics, for which pneumatics are the common realization \cite{dellasantina_model_2020}.
Consequently, the simulation from \cite{dickson_real-time_2025} was used for this validation task, now calibrated against a physical prototype of a two-segment planar soft robot actuated by antagonistic pneumatic pressure inputs (Fig. \ref{fig:approach_diagram}).
Each segment has length 0.122 m and mass 0.13 kg.
Here, the robot is fully actuated, by mapping negative control inputs to opposing pneumatic chambers $p_j,p_{j+1}$.
We calibrate the constants in the $\mathbf{K}, \mathbf{D}$ and $\Lambda$ matrices of the dynamics model using a least squares fit of data points obtained from the hardware prototype's open-loop motion using motor babbling, as suggested by \cite{dellasantina_model_2020}. 
The simulation uses forward Euler numerical integration with $\Delta_t=1e^{-5}$ sec, with a control frequency of 1000 Hz. 

\vspace{-0.4cm}
\subsection{Nominal Controller}

As proof-of-concept, we employ an open-loop sequence of sinusoids for $\mathbf{u}_{nom}(t)$, mimicking worst-case random motions in a sensitive environment: $u_{nom,j} = A_j\sin(2\pi f t)$.
We tune the amplitudes and frequencies to achieve desired coverage of the task space, approximately $\pm 25^\circ$ bending angles, with $A_j=80$ hPa (hecto-Pascal) and $f=0.01$ Hz.

\vspace{-0.25cm}
\subsection{Safe Controller Tests}

We implemented eqn. (\ref{eq:quadprog}) in MATLAB's {\texttt{quadprog}}.
The environment was taken as one constraint for clarity of exposition, with $\mathbf{H}_1 = [0.36, \; 1.0]$, $h_1 = 0.15$, and $F^{max} = 0.16$.
The deformation was modeled as a linear spring, $\psi(n) = k n$, with a spring constant of $k = 11.16$.
These values correspond to visual identification of the robot's motion in hardware, contact with an environment approximately halfway through its sinusoid, with an anticipated maximum displacement of the environment of $n^{max} \approx 1.5$ cm.

We executed the controller with three sets of tuning constants for the $B_1(\mathbf{x})$ CBF, shown in Table \ref{tab:simresults}, corresponding to conservativeness (None, Low, High), chosen by hand.
`None' removes the constraint ($\mathbf{u}^* = \mathbf{u}_{nom}$).
For each simulation, we postprocess to calculate the end effector position $\mathbf{r} = [r_x, \; r_y]^\top$, as well as the simulated force and simulated \textit{safety margin}.
We define the safety margin of the closed-loop trajectory in simulation as a normalized distance to the boundary of our safe set  $\mathcal{P}$, i.e., $\rho(t) = (F^{max} - F(t))/F^{max} = (n^{max}-n(t))/n^{max}$, since the simulation has a one-to-one mapping between force and displacement. 
A safety margin of $\rho = 1$ indicates that the robot is not contacting the environment, and $\rho < 0$ indicates constraint violation.  
To calculate $F(t)$ and $\rho(t)$, we calculate $n(\mathbf{r}(t))$ then invert the force-displacement relationship.

\begin{figure}[!t]
    \centering
\includegraphics[width=1.0\columnwidth]{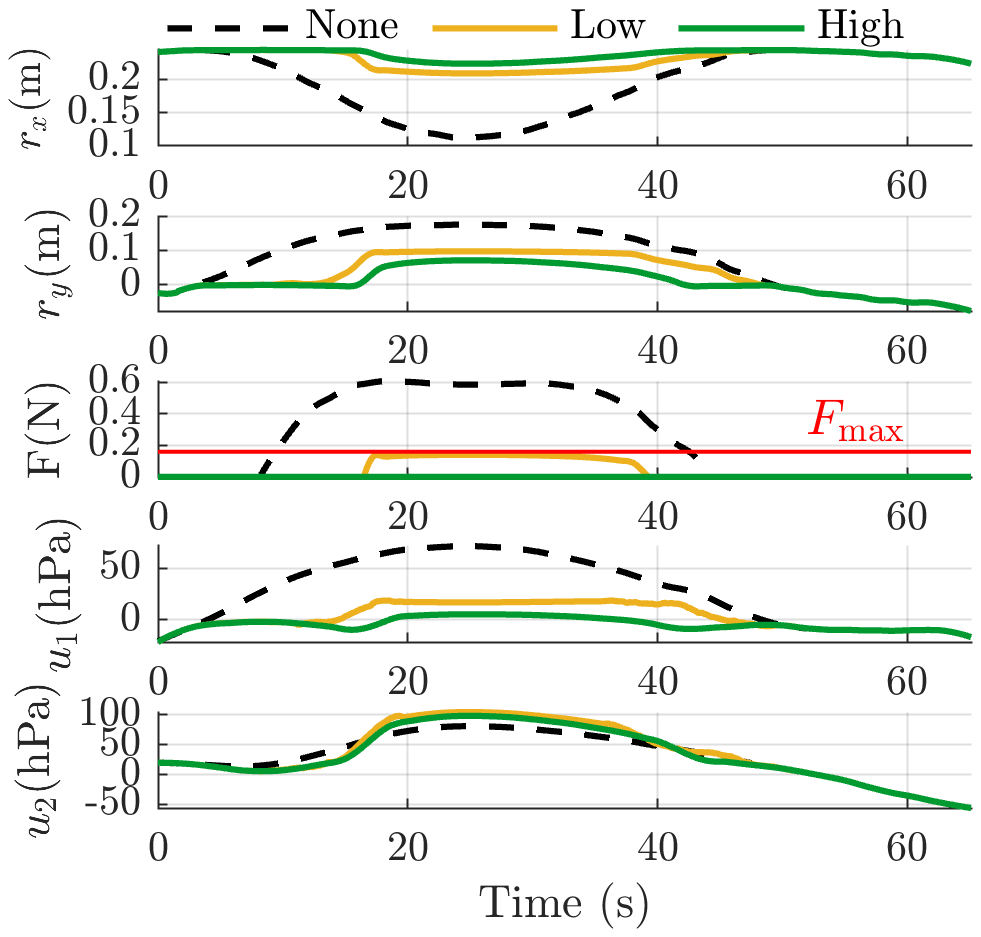}
    \vspace{-0.7cm}
    \caption{Simulation results with three levels of conservativeness of CBF tuning constants all show that the robot's end effector is prevented from moving out of the unsafe set, when the unsafe $\mathbf{u}_{nom}$ control (black dashed line) would violate safety.}
    \vspace{-0.2cm}
    \label{fig:sim_cbfplot}    
\end{figure}

\begin{figure}[!t]
    \centering
\includegraphics[width=0.9\columnwidth]{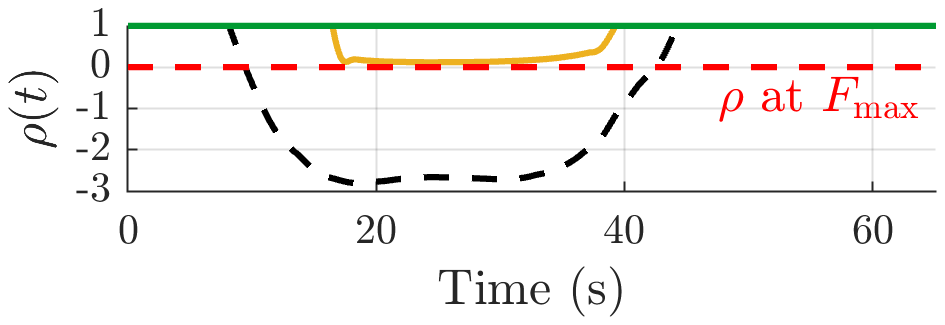}
    \vspace{-0.2cm}
    \caption{All simulations with CBF-based control show a positive safety margin on force application. The most conservative CBF tuning (green) prevents all environmental contact.}
    \label{fig:rho_sim}
    \vspace{-0.4cm}
\end{figure}

\vspace{-0.2cm}
\subsection{Simulation Results}\label{sec:sim}

\begin{table}[!b]
\centering
\vspace{-0.4cm}
\caption{Simulation Experiment Safety Margin vs. Tuning of CBF}
\label{tab:simresults}
\begin{tabular}{|c|c|c|c|c|}
\hline
Conservativeness & $\mathbf{a_E}$ & $\mathbf{b_E}$ & $\mathbf{\gamma}$ & $\mathbf{\rho_{min}}$\\
\hline
High & $0.2$ & $0.2$  & $0.2$ & $1.0$\\
\hline
Low & $2.0$ & $2.0$  & $2.0$  & $0.1208$ \\
\hline
None & $0.0$ & $0.0$  & $0.0$ & $-2.808$ \\ 
\hline
\end{tabular}
\end{table}

Theorem \ref{thm:invarianceP} holds in all our nonzero-CBF simulations (Fig. \ref{fig:sim_cbfplot}), as we observe $\rho(t) > 0$ in Fig. \ref{fig:rho_sim}.
Without our CBF controller in the loop (`None,' black line), the robot violates the safe force constraint, third panel of Fig. \ref{fig:sim_cbfplot}, and reaches a negative safety margin $\rho_{min}$ of $-2.808$.
As we introduce the CBF controller at a low level, the safety margin increases and stays above the critical threshold $\rho = 0$, see Table \ref{tab:simresults}. 
At a high level of conservativeness, the safety margin remains constant at $\rho = 1$, indicating that the robot avoids contact with the environment entirely. Using MATLAB’s {\texttt {tic toc}}, the average solve time for $\mathbf{u}^*(\mathbf{x})$ in eqn. (\ref{eq:quadprog}) was $0.00031$s.

\section{Hardware Experiment Setup}
\label{sec:hw}
We then tested the CBF-based controller in a hardware experiment corresponding to the simulation.
As with the simulation validation, Theorem \ref{thm:invarianceP} held in all cases.

\vspace{-0.3 cm}
\subsection{Hardware Platform}

The soft limb was then used for a real-time control test (Fig. \ref{fig:hw_overview}).
The limb was placed in the environment with fiducial markers (AprilTags \cite{wang_apriltag_2016}) located at the start of the limb and at the end of every segment, captured with an overhead camera, with the bending angles $\mathbf{q}$ calculated from the tag locations. 
The average frame rate of the AprilTag detector was 12.8 fps.
The electronics and software infrastructure were based on prior work \cite{anderson_maximizing_2024}, with feedback at at 10 Hz.
The chambers' pressure was controlled by a microcontroller with a low-level PID loop over a proportional valve  (iQ Volta), recorded by a pressure sensor (Honeywell MPRLS). 
Lastly, a flexible sheet of ABS was attached to a force gauge for use as the deformable surface.
Its spring constant was calibrated using masses placed at the tip under gravitational loading, then was inserted into the test fixture aligned with the robot's tip.

To calibrate the location of the force plate that matched the simulation setup, it was first approximately aligned in the plane by measurements from the origin.
Then, a constant input was applied to the robot, $\mathbf{u} = \mathbf{u}_{T}$, where $\mathbf{u}_{T}$ was taken from the simulation at the first point when the robot's end effector exits the no-contact set $\mathcal{N}$.
The force plate was then adjusted manually while the robot was pressurized until the force readings became (almost) zero.

\begin{figure}[!t]
    \centering
    \vspace{0.2cm}
    \includegraphics[width=0.8\columnwidth]{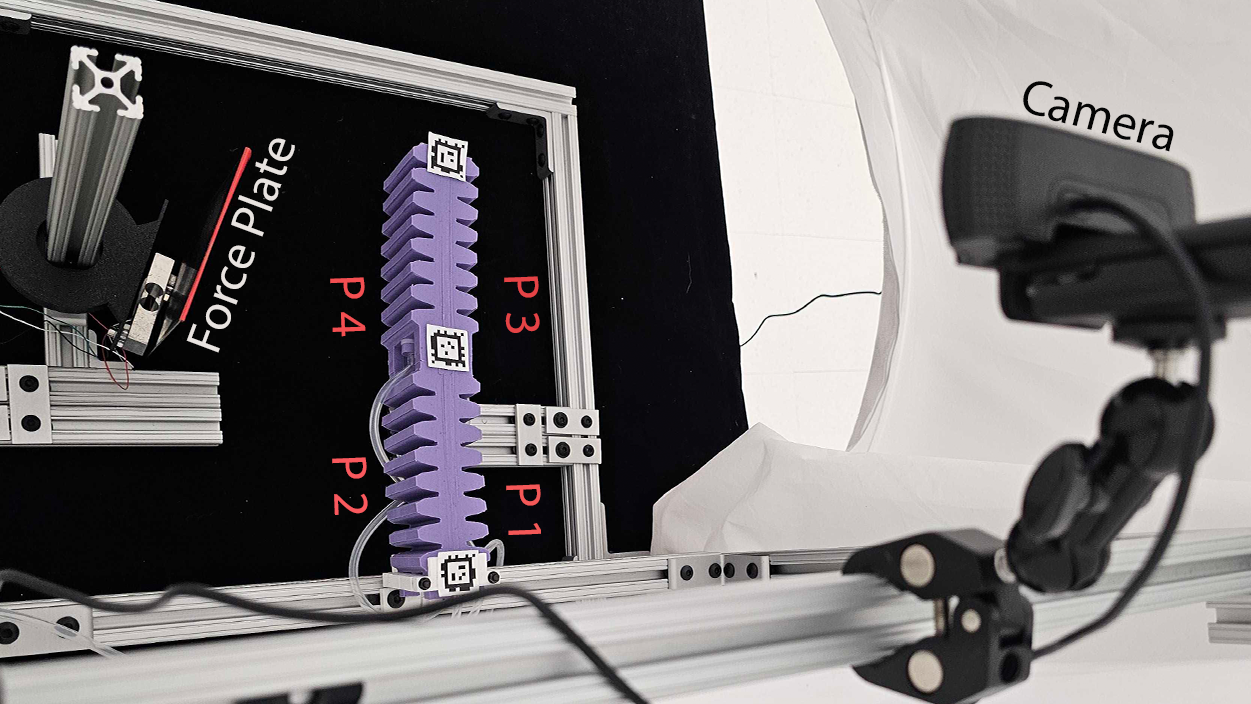}
    \vspace{-0.2cm}
    \caption{Test setup with soft robot limb for hardware validation of CBF-based control framework. A camera placed above the limb captures states $\mathbf{q}$ via three Apriltags placed per-segment. Control inputs from the CBF are fed into the low level controller to increase/decrease the measured pressures ($p_1$, $p_2$, $p_3$, $p_4$) in the limb's chambers. The force plate was calibrated to match the position in simulation and force measurements are used to demonstrate that force safety was achieved.}
    \label{fig:hw_overview}
    \vspace{-0.5cm}
\end{figure}

\vspace{-0.1cm}
\section{Hardware Experimental Results}
\label{sec:hw_results}

We then repeated the same tests as in simulation.
This setup used a ROS2 MATLAB node to receive bending angles from OpenCV and send the outputs of {\texttt{quadprog}} to the microcontroller.
Five trials per conservativeness level were performed, with means and standard deviations ($\pm 2 \sigma$) plotted in Fig. 7 and 8. 

Results were remarkably consistent across trials with low variance, and all trials with an active CBF maintained a positive safety margin (Table \ref{tab:experiment}, Fig. \ref{fig:HWResultsoverview}, \ref{fig:HWRho}).
As expected, decreasing conservativeness leads to a smaller worst-case safety margin $\rho_{min}$, as the robot's tip is allowed closer to the unsafe set boundary.
The `High' conservative tuning makes contact with the flexible force plate, but maintains a force below the limit.
The `Low' conservative tuning maintains a positive safety margin but makes contact with a slightly larger force.
Without the CBF (`None'), the force constraint is violated.
The control input traces (Fig. \ref{fig:HWResultsoverview}) show that the CBF controller reshapes the nominal pressure waveforms primarily during the critical contact period (approximately 15 to 55 sec), preserving the shape of the input elsewhere, as seen in simulation. 
The three test cases in our simulation and hardware tests (different tunings of the CBF) represent a qualitative evaluation of the robustness of our approach in our known environment: the tuning does not affect the validity of our safety proof.

\begin{figure}[!h]
    \centering
    \includegraphics[width=1.0\columnwidth]{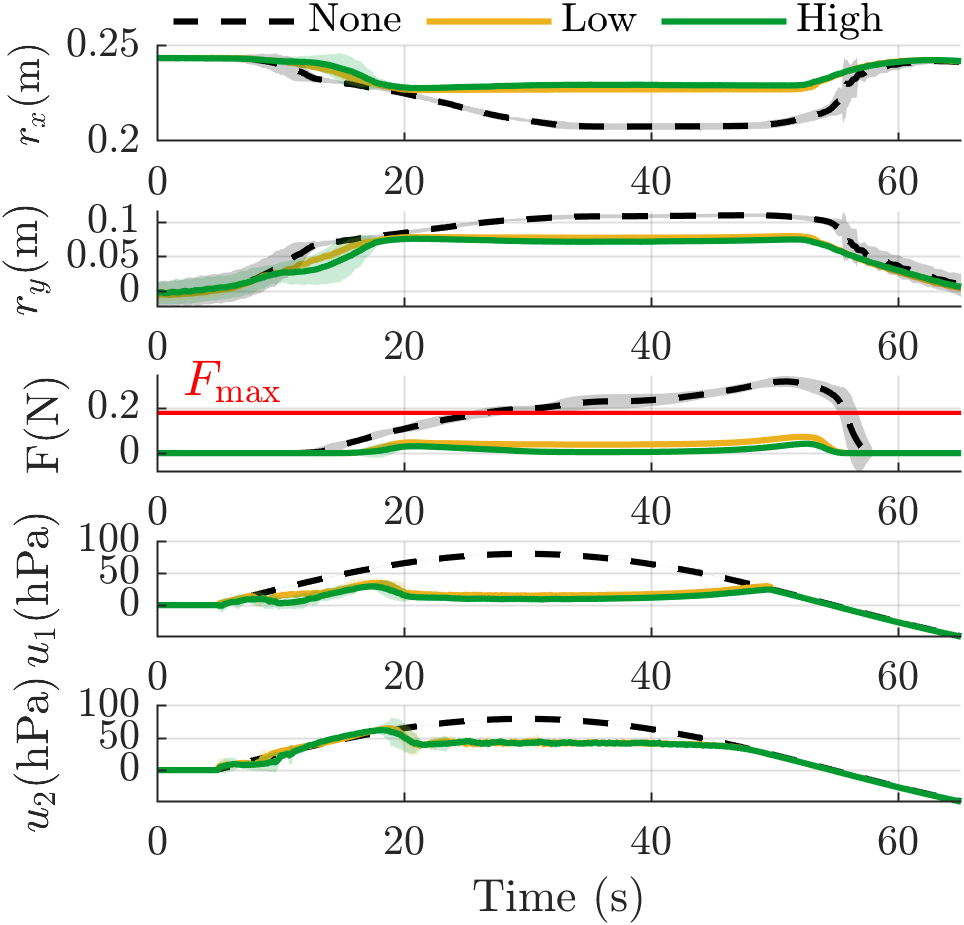}
    \vspace{-0.6cm}
    \caption{Hardware experiment end-effector position $r_x$ and $r_y$ in meters, force $F$ in Newtons, and controller inputs $u_1$ and $u_2$ in hPa for three levels of conservativeness in the CBFs: None, Low, High. Including the CBF ensures the limb never exceeds $F_{max}$.}
    \vspace{-0.2cm}
    \label{fig:HWResultsoverview}
\end{figure}

\begin{figure}[!h]
    \centering
    \includegraphics[width=1.0\columnwidth]{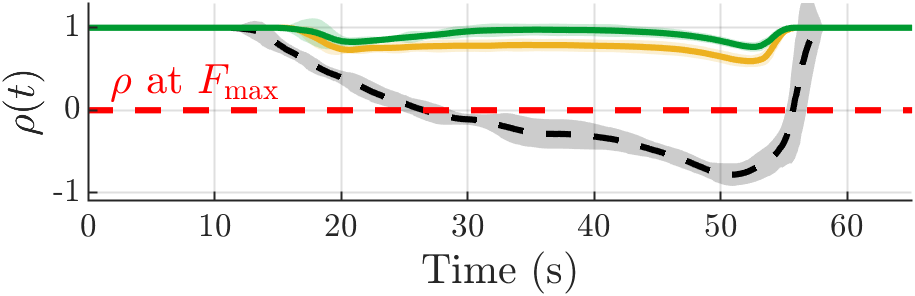}
    \vspace{-0.6cm}
    \caption{Safety margin $\rho(t)$ for the hardware experiments. The red dashed line is the minimum  $\rho$ value for force safety, satisfied for all nonzero CBFs.}
    \label{fig:HWRho}
     \vspace{-0.1cm}
\end{figure}

\subsection{Visualizations}
\label{sec:hw_viz}

To provide a clear visual comparison of the simulation results, Fig. \ref{fig:CBF_timelapse} overlays the simulation's output at three key points versus the hardware test.
The kinematics align well qualitatively, indicating a good calibration of our PCC model.
The magnitude of the deformation of the flexible force plate is also visible.
\begin{table}[b]
\centering
\caption{Hardware Experiment Safety Margin vs. Tuning of CBF}
\label{tab:experiment}
\begin{tabular}{|c|c|c|c|c|}
\hline
Conservativeness & $\mathbf{a_E}$ & $\mathbf{b_E}$ & $\mathbf{\gamma}$ & $\mathbf{\rho_{min}}$\\
\hline
High & $10$ & $10$  & $10$ & $0.767$\\
\hline
Low & $100$ & $100$  & $100$  & $0.594$ \\
\hline
None & $0.0$ & $0.0$  & $0.0$ & $-0.780$ \\ 
\hline
\end{tabular}
\end{table}

\begin{figure}[th]
    \centering
    \includegraphics[width=1\columnwidth]{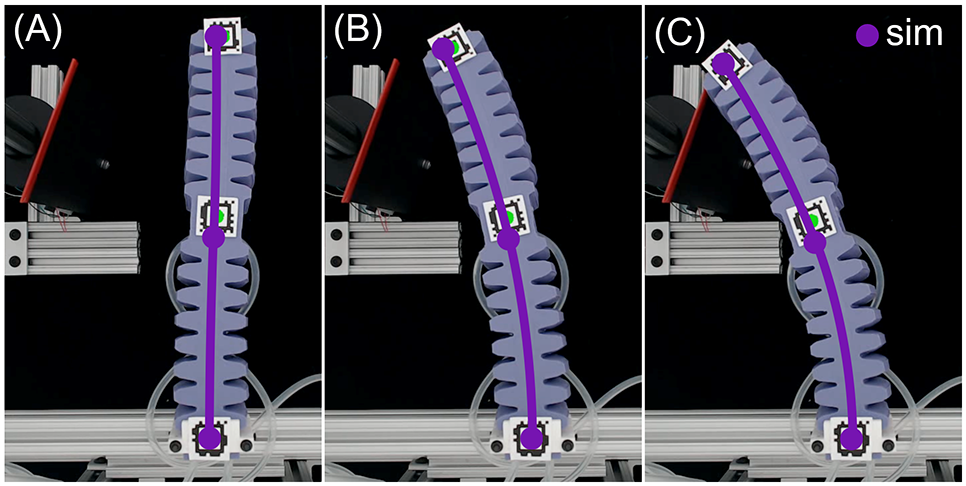}
    \vspace{-0.65cm}
    \caption{Comparison of the simulation's kinematics versus hardware at three key points during a corresponding test demonstrates alignment.} 
    \label{fig:CBF_timelapse}
    \vspace{-0.2cm}
\end{figure}

\vspace{-0.5cm}
\section{Discussion}
\label{sec:dis}

Results in both simulation and hardware experiments confirm that our proposed safety-critical control approach successfully satisfies an invariance condition on the force applied by a pneumatic soft robot's tip. 
At all levels of tuning the controller's constants for conservativeness, the trends in minimum safety margin $\rho_{\min}$, tip position $r_x$, $r_y$, force profiles $F(t)$, force plate deflection $n$ and control inputs $u_1$, $u_2$ remain consistent across both simulation and hardware.

\vspace{-0.3cm}
\subsection{Sources of Error}

Although our results are consistent in terms of safety margin, multiple sources of error exist that cause the trajectories between simulation vs. hardware to be qualitatively different (compare Fig. \ref{fig:sim_cbfplot} vs. Fig. \ref{fig:HWResultsoverview}).
Our setup's assumptions include the PCC dynamics and no gravity, but also a high bandwidth pressure input (whereas our hardware has a finite response time), fast sensing of positions and velocities (whereas our computer vision has a slow frame rate and a finite difference for $\dot{\mathbf{q}}$), normal displacement of the environment (whereas the hardware has some small amount of 3D motion), environmental properties (linear stiffness) and all other typical hardware difficulties (camera distortion, unmodeled disturbances, etc).
Future work using higher-frequency motion capture could analyze practical limitations of our safety proof.

The hardware unsafe motion shows a blocking behavior when the robot is pushed up against the flexible force plate, whereas the simulation has free motion.
This behavior arises from our dynamics model: for this work , we do not model the dynamics of the environment on the robot's tip, and rely on the assumption that the environment is more compliant than the robot in low-force regimes.
The hardware test shows the assumption was a realistic approximation for safe regions with low contact.
More aggressive motions could be modeled by including the environmental forces per past methods in PCC soft dynamics \cite{dellasantina_model_2020}, though a hybrid system may arise.

\vspace{-0.2cm}
\subsection{Theoretical Implications and Limitations}

The controller in this manuscript enforces safety for the end effector frame of a soft manipulator, but does not address whole-body safety: it is possible that other points along the robot could exit $\mathcal{P}$. 
Our analysis isolates the core theoretical advance, and aligns with prevailing approaches in task-space control which typically prioritize end-effector interaction as the primary site of environmental contact.
Prior work in rigid robots addresses whole-body contact by adding CBFs at other relevant frames along the robot’s kinematics \cite{landi_safety_2019}, which could be adopted for our setting without theoretical changes.
We note that preliminary untested results in soft robotics have considered a similar concept \cite{wong_contact-aware_2025}.

This manuscript formulates, simulates, and tests our method for a pneumatic 2D soft robot only.
However, the PCC dynamics model \cite{dellasantina_model_2020} and formulation of a similar controller on a rigid robot \cite{rauscher_constrained_2016} have both been demonstrated in 3D environments, so we hypothesize that 3D motions are primarily a technical rather than theoretical challenge for future work.
Prior results in self-collision for soft robots have shown validity of the 3D dynamics with CBF constraints \cite{patterson_safe_2024}.
Similarly, other soft robot dynamics which are control-affine may also work with our method, such as cable-driven robots \cite{CaluwaertsDesign2014}, though more theoretical investigation is needed for slackness and hysteresis.

\vspace{-0.2cm}
\section{Conclusion and Future Work}
\label{sec:conclusion}

This work represents, to the authors' knowledge, the first formal closed-loop safety-verified controller for a soft robot manipulator's environmental interactions.
The work here suggests an entirely different perspective on control of soft robots, one which simultaneously avoids the paradoxical issues of precision and tracking control in a soft system, while also giving an engineering analysis to claims of `safety' in soft robotics.
Extending and applying this concept may provide the intelligence needed to bring soft manipulation into the real-world environments. 

The basic control barrier function in this manuscript has limitations on the speed of motion in the robot as well as applicability to more complex scenarios \cite{XiaoHigh-Order2022}.
Future work could implement higher-performance barrier function candidates such as High Order CBFs \cite{patterson_safe_2024,wong_contact-aware_2025}, and compare against this manuscript's method as a baseline. 

This manuscript’s assumptions about the environment surrounding the soft robot are simplified to thoroughly evaluate the implications of the control scheme. 
However, realistic scenarios in human-robot contact involve dynamic environments \cite{landi_safety_2019} that change over time and are often unknown a-priori \cite{wang2024guarding}, as well as non-uninform environments.
Time-varying environments can be readily addressed by using time-varying CBFs for time-varying safety constraints \cite{rauscher_constrained_2016}.
% and does not require any additional assumptions about the robot. 
Alternatively, adaptive parameter estimation could be used in fully unknown environments \cite{pati_robust_2023}. 
Similarly, the no-contact set $\mathcal{N}$ could be made to arbitrary geometries by cell decomposition \cite{DicksonSpline2024}, and calculation of the safe set $\mathcal{P}$ from $\mathcal{N}$ could then incorporate localized elasticities or tangential deformation. 
None of these would modify the structure of the controller nor the assumptions about the robot.

Finally, extensions could include formal studies of robustness (e.g., with respect to uncertainty in model or environment) to quantify the relationship between tuning of the CBF and the validity of a safety proof. 
We note that prior work in CBFs for rigid robots has shown robustness in these settings \cite{wang2024guarding}.
Similarly, future work using other safety methods such as reachability \cite{akametalu_reachability-based_2014} could compare performance against the first-of-its-kind, proof-of-concept contributed here.

\bibliographystyle{IEEEtran}
\bibliography{references}

\end{document}